\renewcommand\footnotetextcopyrightpermission[1]{}
\renewcommand\footnotetextcopyrightpermission[1]{}
\begin{document}

\title{Training Cross-Morphology Embodied AI Agents: From Practical Challenges to Theoretical Foundations}

\author{Shaoshan Liu}
\affiliation{%
  \institution{Shenzhen Institute of Artificial Intelligence and Robotics for Society}
  \city{Shenzhen}
  \state{Guangdong}
  \country{China}
}\email{shaoshanliu@cuhk.edu.cn}

\author{Fan Wang}
\affiliation{%
  \institution{Shenzhen Institute of Artificial Intelligence and Robotics for Society}
  \city{Shenzhen}
  \state{Guangdong}
  \country{China}
}

\author{Hongjun Zhou}
\affiliation{%
  \institution{Shenzhen Institute of Artificial Intelligence and Robotics for Society}
  \city{Shenzhen}
  \state{Guangdong}
  \country{China}
}

\author{Yuanfeng Wang}
\affiliation{%
  \institution{Quantum Science Center of Guangdong-Hong Kong-Macau Greater Bay Area}
  \city{Shenzhen}
  \state{Guangdong}
  \country{China}
}\email{wangyuanfeng@quantumsc.cn}

\begin{abstract} 
\textbf{Abstract:} While theory and practice are often seen as separate domains, this article shows that theoretical insight is essential for overcoming real-world engineering barriers. We begin with a practical challenge: training a cross-morphology embodied AI policy that generalizes across diverse robot morphologies. We formalize this as the Heterogeneous Embodied Agent Training (HEAT) problem and prove it reduces to a structured Partially Observable Markov Decision Process (POMDP) that is PSPACE-complete. This result explains why current reinforcement learning pipelines break down under morphological diversity, due to sequential training constraints, memory-policy coupling, and data incompatibility. We further explore Collective Adaptation, a distributed learning alternative inspired by biological systems. Though NEXP-complete in theory, it offers meaningful scalability and deployment benefits in practice. This work illustrates how computational theory can illuminate system design trade-offs and guide the development of more robust, scalable embodied AI. For practitioners and researchers to explore this problem, the implementation code of this work has been made publicly available at \url{https://github.com/airs-admin/HEAT}.
\end{abstract}

\maketitle

\section{Introduction}
\label{sec:intro}

How can we train a single AI policy to control a wide range of physically different robots? As the diversity of robot platforms increases, varying in morphology, sensors, and actuators—traditional approaches that require retraining a specialized controller for each configuration quickly become impractical. Despite progress in reinforcement learning (RL), current methods lack the generalization capacity to scale across embodiments, severely limiting their real-world applicability.

This article stems from a practical challenge we encountered while developing cross-morphology control systems for embodied agents. We found that beyond engineering effort, deeper computational constraints were at play. Addressing these challenges, we argue, requires not just system design innovation but a rigorous theoretical framework that exposes the structural barriers to scalable learning in embodied AI.

To this end, we introduce the \textbf{Heterogeneous Embodied Agent Training (HEAT)} problem: learning a unified, memory-based policy that operates across a family of robots with different physical structures and under partial observability. We formalize HEAT as a structured \textit{Partially Observable Markov Decision Process (POMDP)} in which the robot’s morphology is unobservable during interaction. We demonstrate that solving HEAT is \textbf{PSPACE-complete}, showing that the difficulty is not merely empirical—it is rooted in fundamental computational limits.

Our empirical findings reinforce this conclusion. In realistic training settings, HEAT suffers from several critical bottlenecks:\textbf{Memory-policy coupling} prevents trajectory reuse, making off-policy learning infeasible.{Trajectory incompatibility} across different morphologies blocks batched updates.\textbf{Enforced sequentiality} slows down optimization by requiring full rollout before learning.

These limitations undermine centralized training paradigms such as \textit{Centralized Training with Decentralized Execution} (CTDE), which assume data sharing and gradient computation across agents. In heterogeneous morphology settings, these assumptions break down.

To overcome this, we explore a more biologically inspired approach: \textbf{Collective Adaptation}. Instead of learning a single centralized policy, agents train independently based on local observations, internal memory, and lightweight peer communication. This approach aligns with the \textit{Decentralized Training with Decentralized Execution} (DTDE) paradigm. We formalize Collective Adaptation as a \textit{Decentralized POMDP (Dec-POMDP)} and analyze its computational complexity, showing it is \textbf{NEXP-complete}. Despite the high worst-case complexity, we argue that this framework enables practical advantages in modularity, scalability, and robustness.

\vspace{1em}
\noindent This article makes three contributions:
\begin{itemize}
    \item We introduce the HEAT problem and demonstrate it as PSPACE-complete, revealing why learning shared control policies across morphologies is inherently hard.
    \item We provide a detailed empirical study of the scalability bottlenecks in HEAT, grounded in real-world training data and systems.
    \item We formalize and evaluate Collective Adaptation as an alternative paradigm under the DTDE framework, identifying its practical and theoretical implications.
\end{itemize}

By showing that practical limitations in embodied AI stem from deep theoretical challenges, this work bridges the gap between theory and system design. It offers practitioners a clearer understanding of why current methods struggle to scale—and a principled foundation for building more generalizable, morphology-aware learning systems. Ultimately, it underscores that theory and practice are not separate domains, but mutually reinforcing pillars of progress in our field.

\section{Problem Definition of HEAT}
\label{sec:def}

The HEAT problem entails learning a single, unified policy that can control a collection of robotic embodiments with varying morphologies, dynamics, and control spaces. Each robot in the set is modeled as a Markov Decision Process (MDP):

\begin{equation}
\mathcal{M}_i = \langle S_i, A_i, T_i, R_i, \gamma \rangle
\end{equation}

\noindent where $S_i$ is the state space, $A_i$ is the action space, $T_i: S_i \times A_i \rightarrow \mathcal{P}(S_i)$ defines the transition dynamics, $R_i: S_i \times A_i \rightarrow \mathbb{R}$ is the reward function, and $\gamma \in [0,1)$ is the discount factor.

The unified policy $\pi_\theta$, parameterized by $\theta$, maps current state observations and internal memory to actions:

\begin{equation}
\pi_\theta: (s_t, h_{t-1}) \rightarrow a_t, \quad \text{where } s_t \in S_i,\, a_t \in A_i,\; \forall i \in \{1, 2, \ldots, n\}.
\end{equation}

The agent must act under partial observability: the identity of the underlying robot $\mathcal{M}_i$ is not directly observable and must be inferred from observations. This makes HEAT formally a Partially Observable Markov Decision Process (POMDP), where the morphology index becomes part of the hidden state. The resulting composite POMDP is defined as:

\begin{equation}
\mathcal{P} = (\mathcal{S}, \mathcal{A}, \mathcal{O}, T, R, \Omega, \gamma)
\end{equation}

\noindent where:
\begin{itemize}
\item $\mathcal{S} = \bigcup_{i=1}^{n} S_i \times {i}$ is the joint state space augmented with the latent morphology identifier;
\item $\mathcal{A} = \bigcup_{i=1}^{n} A_i$ is the joint action space;
\item $\mathcal{O}$ denotes the shared observation space across morphologies;
\item $T$ and $R$ generalize the transition and reward models across morphologies;
\item $\Omega(o\mid s)$ denotes the shared observation model.
\end{itemize}

Here, \( \mathcal{O} \) denotes the shared observation space across all morphologies, and \( \Omega(o \mid s) \) is a morphology-invariant observation model that maps each underlying state \( s \in S_i \) (augmented with its hidden morphology index) to a common observation space. This allows the unified policy to operate over heterogeneous embodiments using a consistent observation interface, even when their internal states differ in structure or semantics. The objective is to maximize the expected cumulative reward over each morphology:

\begin{equation}
J_i(\theta) = \mathbb{E}_{\pi_\theta}\left[\sum_{t=0}^{T} \gamma^t R_i(s_t, a_t)\right]
\end{equation}

while enabling policy generalization across the full set of morphologies.

This formulation grounds HEAT as a belief-space planning problem within a unified POMDP framework, where the robot morphology functions as a latent variable that must be inferred from observations. It sets the foundation for investigating the computational and algorithmic complexity of training a single memory-based policy that adapts to unobserved structural variation across embodiments.

\section{A Real-World Case of HEAT}
\label{sec:case}

To study the challenges of generalizing reinforcement learning (RL) across diverse robot morphologies, we simulate a set of modular robots using the MuJoCo physics engine. The training set consists of twelve variants of the standard cheetah morphology, created by systematically removing joints to produce structural diversity. An additional five variants are held out for evaluation. Training proceeds in cycles of 5000 environment steps, each composed of multiple episodes of variable length. Final results are reported as the average cumulative reward across all episodes.

Each robot is represented as a graph of interconnected modules, inspired by prior work on modular control architectures~\cite{huang2020one}. Control is performed hierarchically: local components aggregate signals from submodules and receive coordination messages from parent modules. To handle partial observability, some configurations incorporate local memory at each module to track historical information over time.  For practitioners and researchers to explore this problem, the implementation code of this work has been made publicly available at \url{https://github.com/airs-admin/HEAT}.

Our investigation reveals three critical bottlenecks that arise during training in this setting:

\textbf{1. Memory-policy entanglement.} Memory-based architectures require modules to maintain internal states that evolve over time. However, these memory states are tightly coupled with the model’s parameters. Any policy update renders past memory traces invalid, which means stored trajectories cannot be reused for learning. As a result, the system must be trained strictly on-policy, dramatically increasing data requirements.

\begin{figure}[H]
    \centering
    \includegraphics[width=0.85\textwidth]{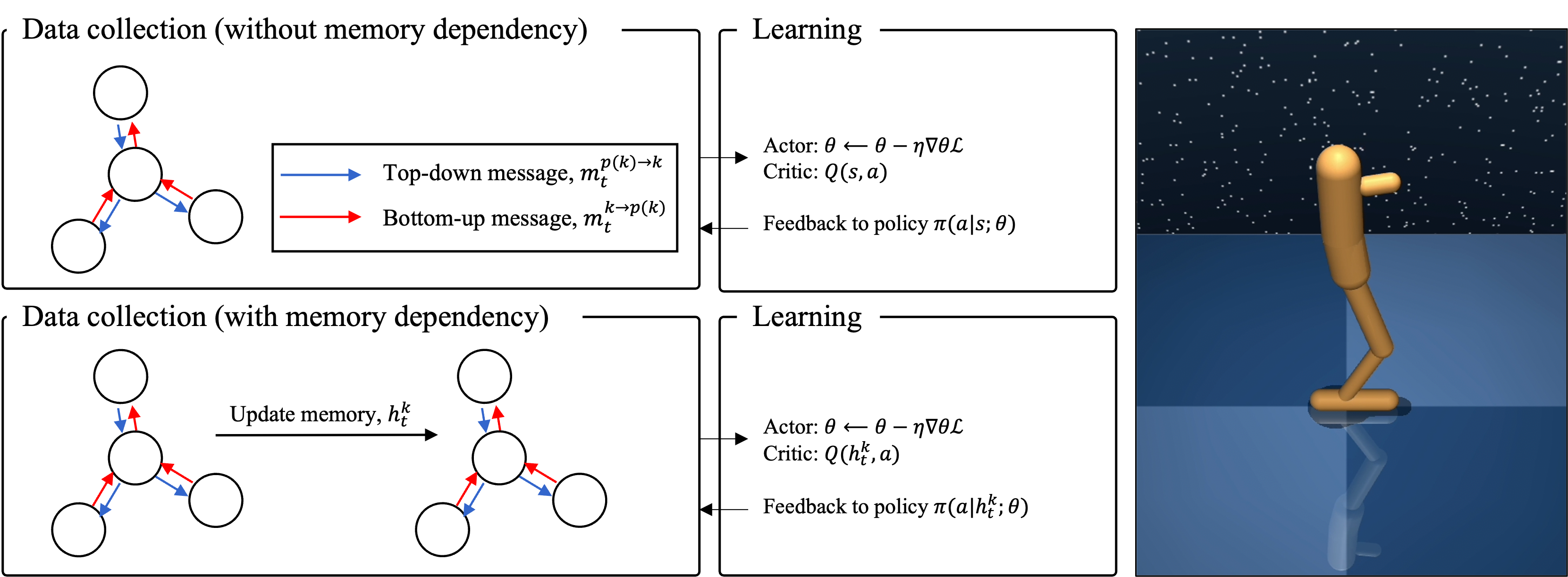}
    \caption{Memory-coupled vs. memory-free training. Recurrent memory improves temporal awareness but prevents off-policy reuse due to dependency on policy parameters.}
\label{fig:figure_Comparison_of_architectures_with_and_without_memory_dependency_within_a_single_morphology}
\end{figure}

\textbf{2. Incompatibility across morphologies.} When training on a population of structurally distinct agents, data collected from one morphology cannot easily be used to train another. This is due to differences in action spaces, message passing structures, and observation formats. Consequently, training must occur morphology-by-morphology, blocking batch updates and limiting scalability.

To quantify these effects, we compare three scenarios: (1) training a single morphology without memory, (2) training a single morphology with memory, and (3) training twelve morphologies with memory. As shown in Fig.~\ref{fig:figure_Data_Generation_and_Model_Optimization_Times_for_Heterogeneous_Morphologies}, memory roughly doubles training time, while structural diversity adds a 3$\times$ overhead due to the inability to reuse data or optimize in parallel.

\begin{figure}[H]
    \centering    \includegraphics[width=1.0\textwidth]{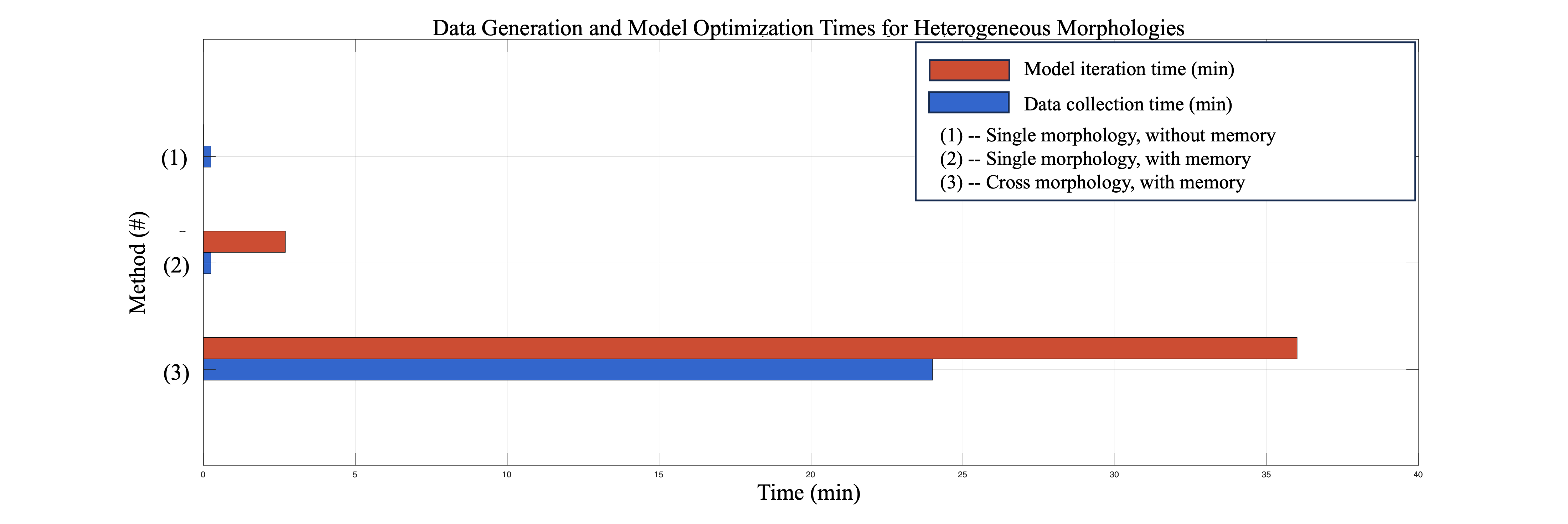}
    \caption{Training time breakdown. Memory and morphological diversity increase overhead due to trajectory incompatibility and delayed gradient updates.}
\label{fig:figure_Data_Generation_and_Model_Optimization_Times_for_Heterogeneous_Morphologies}
\end{figure}

\textbf{3. Sequential training constraints.} Because memory-based architectures require full trajectory unrolling before gradients can be computed, updates must be performed strictly sequentially. Different morphologies must be processed one after another, preventing parallelism and delaying optimization. Fig.~\ref{fig:figure_Sequential_Training_Workflow_for_Heterogeneous_Morphologies} illustrates how this workflow limits sample throughput and hinders large-scale training.

\begin{figure}[H]
    \centering    \includegraphics[width=1.0\textwidth]{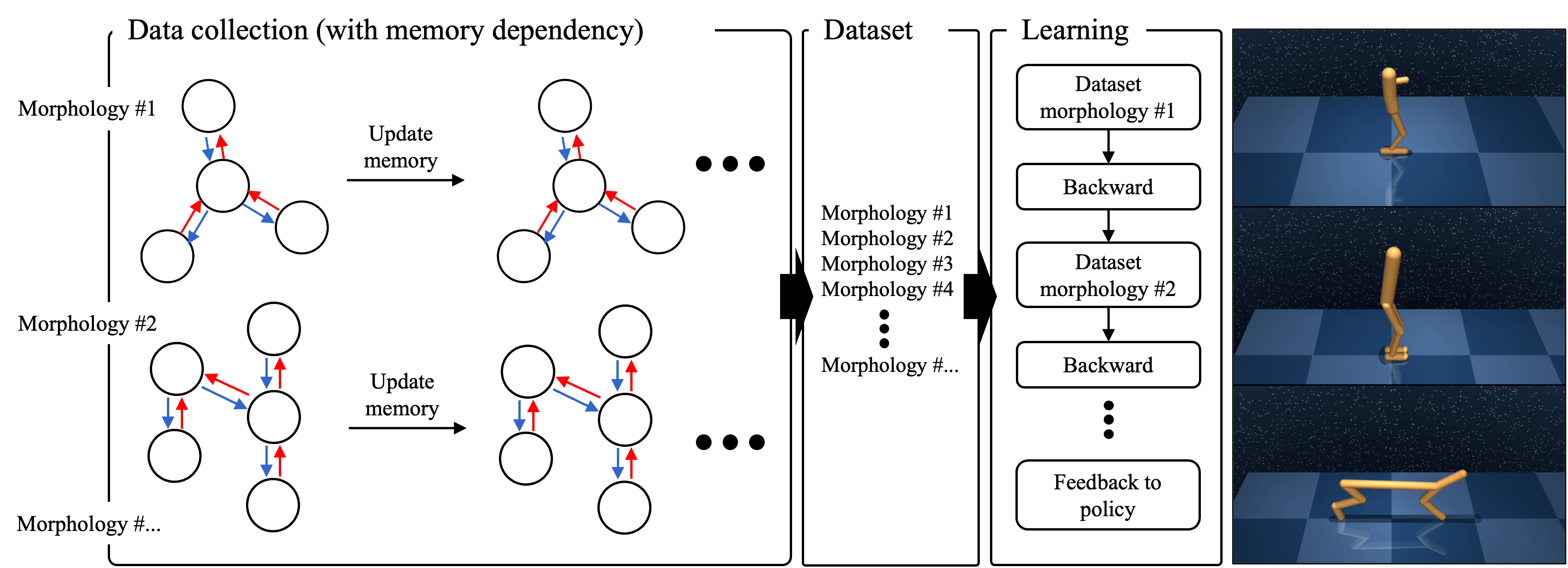}
    \caption{Sequential training pipeline. Memory and morphology coupling prevent batching and slow gradient updates.}
\label{fig:figure_Sequential_Training_Workflow_for_Heterogeneous_Morphologies}
\end{figure}

\textbf{Key Insight:} These findings expose fundamental scalability issues in RL across diverse embodiments. Memory entanglement prevents experience reuse, structural variation blocks data sharing, and sequential workflows slow down optimization. Together, they motivate the HEAT formulation as a more principled framework to decouple learning dynamics from morphology and enable scalable policy adaptation across heterogeneous embodied agents.

\section{POMDP Formulation of the HEAT Problem}

In this section we demonstrate that HEAT is not just a practical reinforcement learning challenge, but also a computationally hard problem that is formally equivalent to a generalized POMDP and proven to be PSPACE-complete. This theoretical grounding explains why scaling HEAT is so difficult: it inherits all the intractability of belief-space planning, requires memory for latent morphology inference, and suffers from poor empirical scalability without principled transfer learning. Thus, solving HEAT will require fundamentally new algorithmic advances that go beyond current deep RL methods.

\subsection{HEAT as a Generalized POMDP}

The HEAT problem can be rigorously framed as a generalized Partially Observable Markov Decision Process (POMDP). 

\paragraph{Problem statement.}
Let \(n\) denote the number of robot morphologies.  
For each \(i\in[n]\) we have state space \(S_i\), action space \(A_i\), transition
kernel \(T_i : S_i\times A_i \to \mathcal P(S_i)\) and reward function
\(R_i : S_i\times A_i \to \mathbb R\).
At the beginning of every episode a latent morphology index
\(m\in[n]\) is sampled once from some prior \(P(m)\) and thereafter remains fixed
but \emph{unobservable}.
The agent receives observations drawn from a morphology‑agnostic sensor model
\(\Omega(o\mid s)\).  A history‑dependent (stochastic) policy
\(\pi(a_t\mid o_{\le t},a_{<t})\) must maximise the expected discounted
return over a finite horizon \(H\).

Formally, the resulting decision process is the tuple  
\[
\mathcal P \;=\; (\mathcal S,\mathcal A,\mathcal O,T,R,\Omega,\gamma),\qquad
\begin{aligned}
  \mathcal S &\;=\; \bigcup_{i=1}^n S_i \times\{i\},\\
  \mathcal A &\;=\; \bigcup_{i=1}^n A_i,
\end{aligned}
\]
where \(T\bigl((s',i)\mid(s,i),a\bigr)=T_i(s'\mid s,a)\) and  
\(R\bigl((s,i),a\bigr)=R_i(s,a)\).
Because the hidden variable \(m\) is carried in the second component of
\(\mathcal S\), \(\mathcal P\) is a \emph{generalised} partially observable
MDP.

\paragraph{Decision version.}
Throughout the complexity discussion we study the canonical
\emph{threshold} problem:

\begin{quote}
\textsc{HEAT}$(\mathcal P,H,K)$:  
Does there exist a (history‑dependent) policy whose expected discounted
return over horizon \(H\) is at least \(K\)?
\end{quote}

Input parameters \((\mathcal P,H,K)\) are encoded in binary;  
\(H\) is given in unary so that the horizon length is part of the input
size.

\paragraph{PSPACE‑hardness.}
Any finite‑horizon POMDP instance
\(\mathcal M=(S,A,O,T,R,\Omega,\gamma)\)
can be reduced in logarithmic space to a
\textsc{HEAT} instance by setting \(n=1\),
\(S_1=S,\;A_1=A,\;T_1=T,\;R_1=R\).
Hence \textsc{HEAT} is at least as hard as the
finite‑horizon POMDP planning problem, which is
\(\textsc{PSPACE}\)-complete~\cite{papadimitriou1987complexity}.  
Therefore \textsc{HEAT} is \(\textsc{PSPACE}\)-hard.

\paragraph{Membership in \(\textsc{PSPACE}\).}
Following the depth‑first belief‑state enumeration of Papadimitriou and Tsitsiklis ~\cite{papadimitriou1986intractable}, one can explore all policies on‑the‑fly while maintaining at most a polynomial‑size description of the current belief and cumulative reward.  Consequently
\textsc{HEAT} belongs to \(\textsc{PSPACE}\).

\begin{theorem}
\textsc{HEAT}$(\mathcal P,H,K)$ is \(\textnormal{PSPACE}\)-complete.
\end{theorem}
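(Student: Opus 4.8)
The plan is to establish the two inclusions separately, following the structure already sketched in the paragraphs preceding the theorem statement. For \(\textnormal{PSPACE}\)-hardness I would present the logspace reduction from the canonical finite-horizon POMDP threshold problem: given an arbitrary POMDP instance \(\mathcal M=(S,A,O,T,R,\Omega,\gamma)\) with horizon \(H\) and threshold \(K\), construct a \textsc{HEAT} instance with \(n=1\), \(S_1=S\), \(A_1=A\), \(T_1=T\), \(R_1=R\), keeping \(H\) in unary and \(K\) in binary. Since the composite process \(\mathcal P\) then has \(\mathcal S=S\times\{1\}\), \(\mathcal A=A\), and the latent index is trivially determined, history-dependent policies for \(\mathcal P\) are in bijection with those for \(\mathcal M\) and achieve identical expected discounted returns; hence a policy meeting threshold \(K\) exists in one instance iff it does in the other. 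Because finite-horizon POMDP planning is \(\textnormal{PSPACE}\)-complete~\cite{papadimitriou1987complexity}, this transfers \(\textnormal{PSPACE}\)-hardness to \textsc{HEAT}. The reduction is clearly computable in logarithmic space since it is essentially the identity on the instance data.

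For membership in \(\textnormal{PSPACE}\) I would argue that \textsc{HEAT}\((\mathcal P,H,K)\) can be decided by a depth-first search over the policy tree in the style of Papadimitriou and Tsitsiklis~\cite{papadimitriou1986intractable}. The key point is that \(\mathcal P\) is itself an ordinary finite-horizon POMDP over the finite state space \(\mathcal S=\bigcup_i S_i\times\{i\}\), finite action space \(\mathcal A\), and observation space \(\mathcal O\): the latent morphology is just one coordinate of the hidden state, so the generalized structure adds no expressive power beyond a standard POMDP. One then recursively enumerates, for each reachable observation history of length at most \(H\), the best next action, maintaining (i) the current belief vector over \(\mathcal S\), which has polynomially many entries each of polynomial bit-length under the standard rational-arithmetic assumptions, and (ii) the accumulated discounted reward so far; the recursion depth is \(H\), which is polynomial because \(H\) is given in unary, and at each level only polynomially much information is stored on the stack. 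Comparing the resulting optimal value against \(K\) decides the instance, so \textsc{HEAT}\(\in\textnormal{PSPACE}\).

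Combining the two inclusions gives \(\textnormal{PSPACE}\)-completeness. The main obstacle, such as it is, lies entirely on the membership side: one must be careful that the belief update is exact and that the belief representation does not blow up in bit-length over \(H\) steps, and that the recursion genuinely fits in polynomial space rather than merely polynomial time with an exponential stack. This is handled by the classical observation that belief updates over horizon \(H\) involve only \(H\)-fold products and sums of the input transition/observation probabilities, so numerators and denominators grow by at most a polynomial factor; alternatively one can discretize beliefs or recompute them from the stored history on demand, trading time for space. The hardness direction is essentially immediate once one notes that \textsc{HEAT} syntactically contains finite-horizon POMDP planning as the \(n=1\) special case. I would also remark, for completeness, that with \(\gamma<1\) and unary \(H\) the threshold \(K\) and all intermediate returns remain polynomially bounded, so no subtlety arises from the discounting or from comparing against \(K\).
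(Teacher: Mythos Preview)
Your proposal is correct and follows essentially the same approach as the paper: hardness via the \(n=1\) reduction from finite-horizon POMDP planning citing~\cite{papadimitriou1987complexity}, and membership via the depth-first belief-state enumeration of~\cite{papadimitriou1986intractable}. You have simply fleshed out the details (belief bit-length, unary horizon, recursion depth) that the paper leaves implicit in its one-line proof.
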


\begin{proof}
PSPACE‑hardness follows from the reduction above; membership follows from the depth‑first search argument, completing the proof.
\end{proof}

\paragraph{Special cases.}
If the morphology index \(m\) were observable at every time‑step, the problem decomposes into \(n\) ordinary MDPs, each solvable in polynomial time via dynamic programming~\cite{bertsekas1996neuro}. Conversely, if the horizon is unbounded or the state/action spaces are
continuous, HEAT inherits the EXPSPACE‑completeness or undecidability results established for general POMDPs~\cite{madani2003undecidability}.

\paragraph{Implications.}
Establishing \(\textsc{PSPACE}\)-completeness clarifies that the computational bottleneck of HEAT arises \emph{solely} from latent morphology inference. Any scalable solution must therefore incorporate (i)~explicit memory mechanisms for belief tracking, (ii)~representation learning that amortises morphology identification, and (iii)~training curricula that exploit structure across morphologies. These insights motivate the architectural directions pursued in the remainder of this work.

\subsection{Worst-Case and Average-Case Time Complexity}

Let $n$ denote the number of morphologies, $K$ the number of episodes required per morphology, and $T$ the number of time-steps per episode. Under sequential on-policy training, the total training time grows linearly as:

\begin{equation}
T_{\text{train}} = O(n \cdot K \cdot T).
\end{equation}

Each step incurs a policy forward pass and, depending on the algorithm, a backward pass through time (e.g., BPTT for LSTM policies). Assuming a policy with $W$ parameters and recurrent hidden size $H$, the per-step computation is $O(W)$ or $O(H^2)$ if $H$ dominates.

In the worst case, there is no transfer between morphologies, and each task must be learned from scratch. This yields a strict lower bound of $O(nKT)$ on the training time. Worse yet, catastrophic forgetting \cite{rusu2016progressive,kirkpatrick2017overcoming} may require the agent to revisit earlier morphologies repeatedly, leading to superlinear or exponential retraining time.

In the average case, some transfer learning may occur, particularly if morphologies share common structures or skills. Under ideal conditions, the per-task episode requirement could diminish as $n$ grows, leading to sublinear scaling in $n$, e.g., $O(n^\alpha KT)$ for $\alpha < 1$ ~\cite{hernandez2021scaling}. However, current methods rarely achieve such transfer in practice, and average-case training remains close to $O(nKT)$.

\subsection{Memory Complexity}

Assuming a unified policy model, the memory required to store network parameters remains $O(W)$, independent of $n$. The training memory footprint per episode is $O(T \cdot H)$ for recurrent models due to unrolled hidden states needed for BPTT. Policies leveraging morphology-conditioned inputs may require additional embeddings or context variables, incurring only modest overhead (e.g., $O(\log n)$).

Replay-based methods (e.g., off-policy approaches) storing trajectories across morphologies could lead to $O(nKT)$ memory if the full buffer is retained. However, on-policy methods like PPO or A3C typically do not maintain large replay buffers, and training proceeds episodically.

\subsection{Reductions and Hardness Implications}

The decision version of HEAT---"Does a policy $\pi_{\theta}$ exist that achieves $J_i(\theta) \geq J_0$ for all $i$?"---can be viewed as a generalization of verifying satisfiability across multiple MDP constraints. This resembles a multi-constraint satisfaction problem, and under minimal assumptions, is NP-hard \cite{blum1988training}.

Moreover, if the agent must infer the morphology during interaction, the problem aligns with meta-POMDPs or hidden-mode MDPs, which are known to be PSPACE-complete \cite{papadimitriou1987complexity}. Learning in such settings demands credit assignment not only across time steps but also across tasks, further complicating convergence guarantees.

\subsection{Implications for Scalability}

In the absence of parallelization, HEAT's time complexity scales linearly with the number of embodiments. Each new morphology adds computational cost and potentially destabilizes learning due to gradient interference \cite{yu2020gradient}. Recurrent architectures exacerbate this bottleneck due to their inherent step-wise dependency \cite{hausknecht2015deep}.

While modular policies \cite{huang2020one} and context-conditioned networks \cite{li2023context} offer potential remedies, these architectures also increase parameterization and training complexity. Without principled transfer learning, HEAT remains a frontier problem: its worst-case behavior is exponential, particularly if catastrophic forgetting or continual learning failures force repeated retraining; average-case linear, which matches common empirical results in multi-task RL; and empirical scalability poor, as described in section \ref{sec:case}.

\section{Collective Adaptation to the Rescue?}
\label{sec:dtde}

While HEAT has been primarily examined under the CTDE (Centralized Training, Decentralized Execution) paradigm, recent advances suggest an alternative architecture: Distributed Training and Distributed Execution (DTDE)~\cite{wen2021dtde,hong2023metagpt,team2023human}. In this section, we examine Collective Adaptation, which is a form of DTDE and perform theoretical analysis on DTDE ~\cite{wang2025ca}.

\subsection{Collective Adaptation}

Galesic \textit{et al.}~\cite{galesic2023beyond} introduce the concept of \textit{collective adaptation} from a bio-inspired perspective, arguing that intelligence can emerge from decentralized, continuous co-adaptation among heterogeneous agents—rather than from centralized training alone.

Viewed through a computer science lens, collective adaptation aligns closely with DTDE. In this paradigm, each agent is independently trained using only local observations, internal memory, and decentralized communication. This architecture avoids the bottlenecks of shared global memory and centralized gradient flow, making it inherently more scalable.

Recent work demonstrates the promise of DTDE. For instance, MetaGPT proposes a multi-agent collaboration framework where LLM-based agents take on modular roles inspired by human Standard Operating Procedures (SOPs)~\cite{hong2023metagpt}. Each agent handles tasks such as product design or coding, coordinating through structured artifacts within a shared workspace—without centralized control.

Similarly, DeepMind proposes a memory-centric architecture in which agents combine local autonomy with shared coordination via hierarchical memory~\cite{team2023human}. Each agent maintains its own episodic memory and contributes to a global memory hub for context retrieval and task alignment. Active perception mechanisms enable agents to adapt their sensing and behavior in real time.

Nonetheless, DTDE sacrifices many of the optimization conveniences of CTDE: gradients must be approximated or estimated locally, and convergence guarantees are significantly weaker. This underscores the need for new research into communication-efficient learning protocols, scalable credit assignment, and decentralized optimization methods \textit{etc}.

\subsection{DTDE as a Dec-POMDP: Formalization and Complexity}

In this subsection, we formalize DTDE as a decentralized partially observable Markov decision process (Dec-POMDP), and characterize its theoretical complexity. 

In DTDE, we assume that each agent operates independently with only local observations and memory, and there is no centralized controller or global synchronization. Agents make decisions and learn policies based solely on their private histories and partial views of the environment and exchange information among them.

This setup can be formally modeled as a decentralized partially observable Markov decision process (Dec-POMDP), defined as a tuple:
\[
\mathcal{D} = \langle I, S, \{A_i\}_{i \in I}, T, R, \{O_i\}_{i \in I}, \{Z_i\}_{i \in I}, \gamma \rangle,
\]
where:
\begin{align*}
    I &= \{1, \dots, n\} &&\text{is the set of agents}, \\
    S & &&\text{is the finite set of global environment states}, \\
    A_i & &&\text{is the finite action space for agent } i, \\
    O_i & &&\text{is the finite observation space for agent } i, \\
    \vec{a} &= (a_1, \dots, a_n) &&\text{is the joint action}, \\
    T(s' \mid s, \vec{a}) &: S \times A_1 \times \cdots \times A_n \rightarrow \Delta(S) &&\text{is the state transition function}, \\
    R(s, \vec{a}) &: S \times A_1 \times \cdots \times A_n \rightarrow \mathbb{R} &&\text{is the shared reward function}, \\
    Z_i(o_i \mid s', a_i) &: S \times A_i \rightarrow \Delta(O_i) &&\text{is the local observation model for agent } i, \\
    \gamma &\in [0,1) &&\text{is the discount factor}.
\end{align*}

Each agent $i \in I$ maintains its own policy $\pi_i : H_i \rightarrow A_i$, where $H_i$ is the local observation-action history. The agents act simultaneously and independently, without access to the global state $s$ or each other's observations, making this formulation suitable for modeling fully distributed learning and execution systems.

From a computational complexity perspective, solving Dec-POMDPs optimally has been proven to be non-deterministic exponential time complete (NEXP-complete) for finite horizons, as established by Bernstein \textit{et al.} ~\cite{bernstein2002complexity}. This hardness stems from the fact that each agent's policy must be a function over its full observation history, leading to a doubly exponential joint policy space.

\subsection{HEAT vs. Collective Adaptation}

The CTDE paradigm, exemplified by the HEAT problem, reduces to a POMDP and is \textbf{PSPACE-Complete}, solvable using polynomial memory, though potentially unbounded in time. In contrast, DTDE, represented by Collective Adaptation, aligns with the Dec-POMDP framework and is \textbf{NEXP-Complete}, requiring nondeterministic exponential time. This formally establishes DTDE as strictly harder than CTDE in terms of worst-case computational complexity. Specifically, Dec-POMDPs scale exponentially in both the number of agents and planning horizon, reflecting the inherent difficulty of decentralized coordination under partial observability.

Nonetheless, despite this theoretical intractability, there are compelling practical reasons to pursue DTDE in real-world embodied AI systems.

First, DTDE enables \textbf{inherent parallelism}. Each agent operates based on its own local observations, memory, and peer-to-peer communication, allowing for decentralized data collection and asynchronous policy updates. This eliminates the centralized training bottleneck present in CTDE, leading to improved throughput, resilience, and scalability across distributed physical systems.

Second, DTDE offers \textbf{deployment feasibility}. In embodied AI applications—such as robot swarms, autonomous fleets, or assistive systems—global synchronization is often infeasible due to communication latency, bandwidth limits, or sensor constraints. DTDE allows agents to learn and act independently while coordinating through lightweight, local interactions, making it robust to network failures and adaptive to real-world variability.

Third, DTDE supports \textbf{modular and scalable learning}. Although it lacks the convergence guarantees of centralized methods, recent advancements in decentralized optimization, shared memory abstractions, and modular policy design—such as MetaGPT's role-based collaboration or DeepMind’s hierarchical memory systems—show that approximate DTDE solutions can achieve competitive real-world performance.

In essence, while CTDE offers cleaner theoretical properties, DTDE brings critical advantages in terms of system-level flexibility, robustness, and scalability—traits that are indispensable for deploying embodied intelligence in complex, real-world environments.


\section{Conclusion}
\label{sec:conc}

This article stems from the real-world problem of training a unified policy across diverse robot morphologies, which we define as the HEAT problem. We demonstrate that HEAT is PSPACE-complete, verifying that the challenges practitioners face are not just empirical but fundamental. Key system bottlenecks like memory-policy entanglement, data incompatibility, and sequential training are unavoidable under standard reinforcement learning setups. For practitioners, the takeaway is clear: scaling embodied AI across morphologies requires rethinking architecture and training paradigms. Our analysis suggests that decentralized, memory-driven approaches, while complex, may better align with the realities of deployment, offering scalability through modularity and local learning. Rather than pushing existing pipelines harder, practitioners should design for structure-aware learning, explicit morphology inference, and parallelizable training workflows.

\bibliographystyle{ieeetr}
\bibliography{references}
\end{document}